\documentclass[conference]{IEEEtran}

\usepackage{amsmath,amssymb,amsthm,mathtools,bm}
\usepackage{graphicx}
\usepackage{booktabs}
\usepackage{balance}
\usepackage{microtype}
\usepackage{cite}
\usepackage[hidelinks]{hyperref}

\newtheorem{lemma}{Lemma}
\newtheorem{theorem}{Theorem}
\newtheorem{proposition}{Proposition}
\newtheorem{corollary}{Corollary}
\newtheorem{remark}{Remark}
\newcommand{\E}{\mathbb{E}}
\newcommand{\Var}{\mathrm{Var}}
\newcommand{\R}{\mathbb{R}}

\title{Jacobian-Velocity Bounds for Deployment Risk Under Covariate Drift}
\author{
\IEEEauthorblockN{Jonathan R. Landers}
\IEEEauthorblockA{
Independent Researcher\\
New York, NY, USA\\
\href{mailto:jonathan.robert.landers@gmail.com}{jonathan.robert.landers@gmail.com}
}
}

\begin{document}
\maketitle

\begin{abstract}
We study long-horizon deployment of a frozen predictor under dynamic covariate shift. A time-domain Poincar\'e inequality first reduces temporal risk volatility to derivative energy. A Jacobian-velocity theorem then supplies the corresponding pathwise control. Given explicit regularity and domination assumptions, the theorem identifies directional tangent energy along the deployment path as the governing quantity. Under low-rank drift, that quantity reduces to directional Jacobian energy in the drift subspace, motivating drift-aligned tangent regularization (DTR) and a matched monitoring proxy. Rather than smoothing the network isotropically, DTR penalizes sensitivity only along estimated drift directions. We validate the theorem-to-method pipeline in four experiments: a synthetic benchmark for the time-domain inequality, a controlled synthetic comparison against isotropic Jacobian regularization, and two frozen-deployment studies on the UCI Air Quality and Tetouan power-consumption datasets. DTR reduces risk volatility and directional gain in the controlled low-rank regime and beats isotropic smoothing there. It also gives validation-selected deployment gains on both real datasets, with the Air Quality subspace estimated from target-orthogonal sensor motion. Moderate drift-subspace misspecification is tolerable while orthogonal misspecification largely removes the benefit.
\end{abstract}

\begin{IEEEkeywords}
covariate drift, deployment risk, Jacobian regularization, temporal robustness, distribution shift
\end{IEEEkeywords}

\section{Introduction}
Distribution shift is often discussed statically, either through source-target generalization gaps in domain adaptation \cite{ben-david-shift} or through dataset-shift taxonomies that separate covariate, label, and concept shift \cite{moreno-shift}. Long-horizon deployment is different. A model may be frozen while the environment moves continuously, so the relevant object is not a single out-of-domain error but a risk trajectory. Temporal benchmarks and deployment studies make the same point from different angles. In evolving or temporally structured settings, performance can move materially over time \cite{gama-drift,koh-wilds,yao-wildtime,han-temporal}. The practical question is therefore not only whether shift occurs, but how rapidly deployment risk can vary once deployment begins.

This paper develops a geometric answer. If the feature stream follows a path $X_t$ and the predictor has local tangent map $J_f(X_t)$, then instability is governed not by shift magnitude alone but by the directional interaction between the data velocity $\dot X_t$ and the model tangent geometry. The dangerous quantity is $J_f(X_t)\dot X_t$. The intuition is simple. The same deployment path can be almost harmless for one model and damaging for another, because only motion through steep tangent directions accumulates into large score variation. If the stream moves through a flat direction, drift can be present without being operationally dangerous. If it moves through a steep direction, even gentle but persistent motion can compound into long-horizon degradation. The central bound formalizes this. Under explicit along-path regularity and directional domination assumptions,
\[
\Var_U(r(U)) \le \frac{\beta^2 T}{\pi^2}\int_0^T \E\|J_f(X_t)\dot X_t\|^2dt.
\]

First, a time-domain Poincar\'e inequality converts temporal volatility into derivative energy. Second, a Jacobian-velocity theorem supplies the corresponding pathwise control under explicit assumptions on the deployment performance field. Third, a low-rank drift specialization naturally motivates a drift-aligned regularizer and a matched monitoring proxy. Fourth, we test the resulting pipeline with a synthetic time-domain sanity check, a compact isotropic-versus-directional comparison, and two real frozen-deployment studies on Air Quality and Tetouan. The empirical sequence mirrors this logic: verify the temporal inequality, compare competing regularizers in the idealized low-rank regime, then move to real deployments where the drift subspace must be estimated. Concretely, the analysis identifies directional tangent energy as the quantity governing frozen-model deployment volatility and uses the resulting low-rank reduction to motivate drift-aligned tangent regularization (DTR) and a matched hazard proxy. It then gives empirical evidence that the same directional geometry remains visible in controlled synthetic experiments and two real field deployments.

The paper is deliberately scoped to frozen-model deployment under covariate drift. It does not address general distribution shift or test-time adaptation, but asks specifically what local geometry governs frozen-model degradation and whether that geometry remains visible on real drifting data.
Code, scripts, and supplementary materials are available in a public repository \cite{landers-repo}. A companion project page with figures and summaries is available at \cite{landers-site}.

\section{Related Work}
This work sits closest to temporal distribution shift and dynamic deployment evaluation. Classical shift analyses study source-target error under changed input distributions \cite{ben-david-shift}, dataset-shift taxonomies separate covariate, label, and concept shift \cite{moreno-shift}, and concept-drift work emphasizes sequential adaptation under evolving streams \cite{gama-drift}. More recent benchmark-driven work has made temporal structure explicit, both in naturally shifted datasets and in stylized train-deploy protocols \cite{koh-wilds,yao-wildtime,han-temporal}. Related theory also studies heterogeneous distribution shift without an explicit temporal deployment path \cite{simchowitz-shift}. Here the focus is the volatility of one frozen model along a deployment path rather than a worst-case shift gap.

This work is also adjacent to test-time adaptation \cite{sun-ttt,wang-tent,gui-atta}. Those methods update the model online to track changing data. Here the predictor remains frozen. That restriction is deliberate. It isolates the geometric mechanism of degradation itself and yields a quantity that can be analyzed, regularized, and monitored without mixing in the dynamics of online parameter updates.

Finally, the work draws on two technical lines. The regularization side is related to Jacobian-based control of robustness \cite{sokolic-jacobian,kim-jacobian} and to empirical evidence linking sensitivity to generalization \cite{novak-sensitivity}, but the object here is directional rather than isotropic. Only tangent energy in likely drift directions enters the bound. The monitoring side is related to shift detection, uncertainty under shift, and sequential change detection \cite{rabanser-shift,ovadia-shift,wu-rscusum,cao-xie-allerton2015,xie-lowrank-allerton2016,krishnamurthy-allerton2022,cooper-allerton2024}. Our hazard score is not presented as a sequentially optimal detector. It is a model-aware proxy matched to the same drift geometry used in training.

\begin{figure*}[t]
  \centering
  \includegraphics[width=0.9\textwidth]{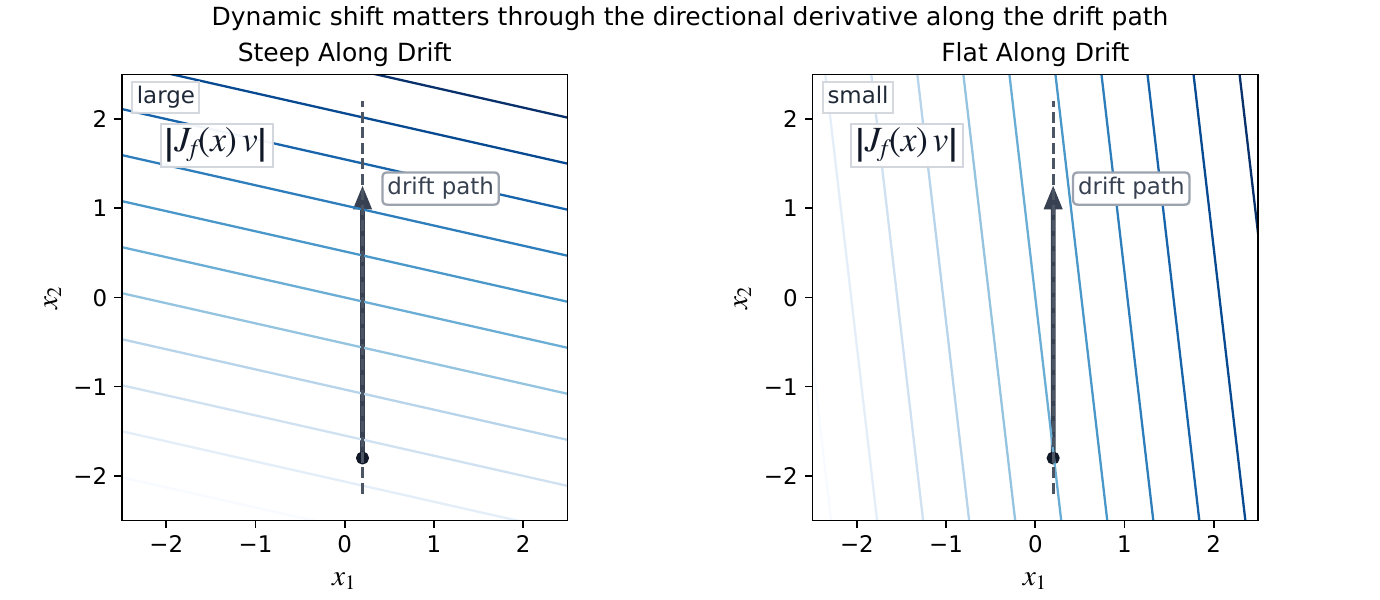}
  \caption{Geometric intuition for drift-aligned instability. The deployment path is the same in both panels, but the local tangent geometry is different. When the drift direction aligns with a steep tangent direction, the score changes rapidly, whereas when the model is flat along that same direction, the score remains stable. This figure visualizes the directional quantity that appears in Theorem~\ref{thm:jv} and \eqref{eq:jv-bound}.}
  \label{fig:geometry}
\end{figure*}

\section{Setup and Assumptions}
Let $t\in[0,T]$ index deployment time. Let $X_t\in\R^d$ denote the covariate process, and let $f_\theta:\R^d\to\R$ be a frozen scalar score with row Jacobian $J_f(x):=\nabla f_\theta(x)^\top$. We study a scalar deployment performance field $g_\theta:\R^d\to\R$ and the associated trajectory
\begin{equation}
\label{eq:risk-trajectory}
r(t):=\E[g_\theta(X_t)].
\end{equation}
For classification, a natural example is
\[
g_\theta(x)=\E_{Y\sim P_0(\cdot\mid x)}[\ell(f_\theta(x),Y)],
\]
but the theorem below does not rely on this representation alone. Instead, we state the regularity needed to make the risk-trajectory argument defensible.
The point of this setup is to separate two roles that are often conflated in shift analyses. The model is frozen, while the covariate stream is allowed to move.

To measure temporal instability, draw $U\sim\mathrm{Unif}[0,T]$ and define
\begin{equation}
\label{eq:temporal-volatility}
\Var_U(r(U))=\E_U[(r(U)-\E_U r(U))^2].
\end{equation}
The quantity in \eqref{eq:temporal-volatility} is the notion of deployment volatility studied throughout the paper.

\textit{Assumption A1 (trajectory regularity).} The sample paths $t\mapsto X_t$ are almost surely absolutely continuous and
\[
\E\int_0^T \|\dot X_t\|^2dt < \infty.
\]

\textit{Assumption A2 (chain rule under the expectation).} The field $g_\theta$ is weakly differentiable, $t\mapsto g_\theta(X_t)$ is almost surely absolutely continuous, and
\[
g_\theta(X_t)-g_\theta(X_s)=\int_s^t \nabla g_\theta(X_u)^\top \dot X_u\,du,
\]
for all $0\le s<t\le T$ on almost every sample path, with
\[
\E\int_0^T |\nabla g_\theta(X_t)^\top \dot X_t|dt < \infty.
\]

\textit{Assumption A3 (score Jacobian well posed and directional domination along deployment).} There exists a measurable matrix field $J_f$ such that $J_f(X_t)$ is well defined almost surely for almost every $t\in[0,T]$,
\[
\E\int_0^T \|J_f(X_t)\dot X_t\|^2dt < \infty,
\]
and there exists $\beta>0$ such that
\[
|\nabla g_\theta(X_t)^\top \dot X_t| \le \beta \|J_f(X_t)\dot X_t\|
\]
for almost every $t\in[0,T]$ on almost every sample path.

Assumption A3 is the paper's main compatibility condition. It should not be read as a consequence of dynamic covariate shift alone. Rather, it asserts that along the realized deployment path, the local change in the chosen performance field is dominated by the local change in the score, and that the score Jacobian is well posed there.

\begin{remark}[What A3 means, when it holds, and when it fails]
\label{rem:a3}
A3 is deliberately along-path rather than pointwise in every ambient direction. The proof only needs control of $\nabla g_\theta(X_t)^\top \dot X_t$ by $J_f(X_t)\dot X_t$ along the realized velocity.

A3 holds in the natural composition case. If $g_\theta(x)=h(f_\theta(x))$, then
\[
\nabla g_\theta(X_t)^\top \dot X_t
=
h'(f_\theta(X_t))\,J_f(X_t)\dot X_t,
\]
so A3 follows with $\beta=\sup|h'|$ whenever $J_f(X_t)$ is well defined almost surely for almost every $t$. For Bernoulli cross-entropy against a fixed soft target, $|h'|\le 1$, giving $\beta=1$. The theorem is therefore not ReLU-specific. Smooth predictors fit directly, and piecewise-linear predictors such as ReLU networks fit as soon as the deployment path avoids their nondifferentiability set almost surely for almost every deployment time.

A3 does not follow from pure covariate shift alone. The general risk $g_\theta(x)=\E_{Y\sim P_0(\cdot\mid x)}[\ell(f_\theta(x),Y)]$ carries an extra $\nabla\eta(x)$ term from the label conditional $\eta(x)$, not necessarily dominated by the score-Jacobian term even when $\eta$ is time-invariant. A3 also fails under simultaneous concept shift, and at score saturation wherever $\nabla f_\theta(x)=0$ but $\nabla g_\theta(x)\neq 0$.
\end{remark}

\section{Main Theorem Package}
The first step is purely temporal.

\begin{lemma}[Time-domain derivative-energy control]
\label{lem:poincare}
If $r$ is absolutely continuous on $[0,T]$, then
\begin{equation}
\label{eq:poincare-bound}
\Var_U(r(U)) \le \frac{T}{\pi^2}\int_0^T (r'(t))^2dt.
\end{equation}
\end{lemma}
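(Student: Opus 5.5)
The plan is to reduce the claim to the sharp Neumann Poincar\'e (Wirtinger) inequality on an interval. Since $U\sim\mathrm{Unif}[0,T]$, we have $\Var_U(r(U))=\frac1T\int_0^T (r(t)-\bar r)^2dt$ with $\bar r=\frac1T\int_0^T r(t)\,dt$. The target \eqref{eq:poincare-bound} is therefore equivalent to
\[
\int_0^T (r(t)-\bar r)^2dt \le \frac{T^2}{\pi^2}\int_0^T (r'(t))^2dt,
\]
which is Wirtinger's inequality with the optimal constant $(T/\pi)^2$. First we dispose of the trivial case: if $\int_0^T (r')^2=\infty$ there is nothing to prove, so we assume $r'\in L^2(0,T)$, i.e.\ $r\in H^1(0,T)$. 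Absolute continuity gives $r(t)=r(0)+\int_0^t r'$, so $r$ is bounded and all integrals are finite.

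The core step is a cosine expansion. The functions $\phi_0=T^{-1/2}$ and $\phi_k(t)=\sqrt{2/T}\cos(k\pi t/T)$ for $k\ge1$ form an orthonormal basis of $L^2(0,T)$, and $\psi_k(t)=\sqrt{2/T}\sin(k\pi t/T)$ for $k\ge1$ form another. Write $a_k=\langle r,\phi_k\rangle$. Subtracting the mean removes exactly $a_0$, so Parseval gives $\int_0^T (r-\bar r)^2=\sum_{k\ge1}a_k^2$.

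Next we compute the sine coefficients of $r'$. Integration by parts for absolutely continuous functions (valid since $r$ and $\phi_k$ are AC) gives
\[
\langle r',\psi_k\rangle=\Big[r\psi_k\Big]_0^T-\int_0^T r\,\psi_k'.
\]
The boundary term vanishes because $\sin(k\pi)=\sin 0=0$, and $\psi_k'=(k\pi/T)\phi_k$. Hence $\langle r',\psi_k\rangle=-(k\pi/T)a_k$. Bessel's inequality in the sine system then yields
\[
\int_0^T (r')^2\ge\sum_{k\ge1}\frac{k^2\pi^2}{T^2}a_k^2\ge\frac{\pi^2}{T^2}\sum_{k\ge1}a_k^2.
\]
Combining this with the Parseval identity from the previous step and dividing by $T$ gives \eqref{eq:poincare-bound}.

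The main obstacle is justifying completeness of the cosine system on $[0,T]$, which is what makes the Parseval step an identity rather than a mere inequality. We will handle it by even reflection. Extend $r$ to $[-T,T]$ by $r(-t)=r(t)$; the full Fourier series of this even extension contains only cosines, and completeness of the trigonometric system on $[-T,T]$ then transfers to the cosine system on $[0,T]$. The sine side needs only Bessel's inequality, so no completeness argument is required there. As an alternative route, one could note that the first nonzero Neumann eigenvalue of $-d^2/dt^2$ on $[0,T]$ is $\pi^2/T^2$ and apply the Rayleigh quotient, but the Fourier argument above is more self-contained.
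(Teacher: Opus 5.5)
Your proposal is correct and follows the paper's route: write $\Var_U(r(U))=T^{-1}\int_0^T(r-\bar r)^2\,dt$, apply Wirtinger's inequality with the Neumann constant $T^2/\pi^2$, and divide by $T$. The only difference is that the paper cites Wirtinger's inequality, while you also prove it with the cosine/sine expansion; that argument is sound, including the completeness of the cosine system via even reflection and the trivial case $\int_0^T (r')^2=\infty$.
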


\begin{proof}
Let $\bar r=T^{-1}\int_0^T r(t)\,dt$. The Wirtinger inequality yields
\[
\int_0^T (r(t)-\bar r)^2dt \le \frac{T^2}{\pi^2}\int_0^T (r'(t))^2dt.
\]
Divide by $T$.
\end{proof}

Lemma~\ref{lem:poincare} says volatility is impossible without derivative energy; see \eqref{eq:poincare-bound}. The next result identifies the geometric driver of that energy.

\begin{theorem}[Jacobian-velocity control of risk volatility]
\label{thm:jv}
Assume A1--A3. Then $r$ is absolutely continuous and
\begin{equation}
\label{eq:jv-bound}
\Var_U(r(U)) \le \frac{\beta^2 T}{\pi^2}\int_0^T \E\|J_f(X_t)\dot X_t\|^2dt.
\end{equation}
\end{theorem}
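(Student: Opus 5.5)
The plan is to show that $r$ is absolutely continuous with an explicit derivative, bound that derivative pointwise by the Jacobian-velocity energy, and then invoke Lemma~\ref{lem:poincare}.

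\textbf{Step 1 (absolute continuity and the derivative).} Take expectations in the pathwise identity of A2. For $0\le s<t\le T$,
\[
r(t)-r(s)=\E\int_s^t \nabla g_\theta(X_u)^\top \dot X_u\,du .
\]
The integrability condition $\E\int_0^T |\nabla g_\theta(X_u)^\top \dot X_u|\,du<\infty$ justifies Fubini--Tonelli, so the order can be exchanged:
\[
r(t)-r(s)=\int_s^t \phi(u)\,du,\qquad \phi(u):=\E[\nabla g_\theta(X_u)^\top \dot X_u].
\]
The same condition shows $\phi\in L^1[0,T]$, and $\phi$ is defined for a.e.\ $u$.

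One point needs care: $r(t)$ itself must be finite for the differences to make sense. I would handle this by fixing one $s_0$ with $\E|g_\theta(X_{s_0})|<\infty$, which is implicit in defining $r$. Alternatively, I would read the statement as concerning $r$ up to an additive constant. Either way, $r$ is then an indefinite integral of an $L^1$ function, hence absolutely continuous, with $r'=\phi$ almost everywhere.

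\textbf{Step 2 (pointwise derivative bound).} For a.e.\ $t$, apply Jensen's inequality (or Cauchy--Schwarz) and then the domination in A3:
\[
(r'(t))^2\le \big(\E|\nabla g_\theta(X_t)^\top \dot X_t|\big)^2 \le \beta^2\big(\E\|J_f(X_t)\dot X_t\|\big)^2\le \beta^2\,\E\|J_f(X_t)\dot X_t\|^2 .
\]
The domination in A3 holds for a.e.\ $t$ on a.e.\ path, a joint null-set statement on $[0,T]\times\Omega$. By Fubini it therefore holds almost surely for a.e.\ $t$, which is exactly what the expectation step needs.

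\textbf{Step 3 (integrate and apply the lemma).} Integrating the bound from Step 2 over $[0,T]$ gives a finite right-hand side, by the square-integrability clause of A3. Lemma~\ref{lem:poincare} applies because $r$ is absolutely continuous by Step 1. Substituting the integrated bound into \eqref{eq:poincare-bound} yields \eqref{eq:jv-bound}.

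\textbf{Main obstacle.} The only genuinely delicate step is Step 1. It requires the measurability and null-set bookkeeping that lets the pathwise A2 identity pass through the expectation, and it requires finiteness of $r$ at a base point. Once $r'=\E[\nabla g_\theta^\top\dot X]$ is established, the remaining steps are routine.
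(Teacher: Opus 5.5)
Your proposal is correct and follows the paper's proof essentially step for step. You pass the A2 identity through the expectation via Fubini to get $r'(t)=\E[\nabla g_\theta(X_t)^\top \dot X_t]$, bound $(r'(t))^2$ by Jensen together with the A3 domination, and conclude with Lemma~\ref{lem:poincare}. Your extra bookkeeping (finiteness of $r$ at a base point, the product null-set reading of A3, and applying domination before Jensen rather than after) only makes explicit what the paper leaves implicit.
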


\begin{proof}
Fix $0\le s<t\le T$. By Assumption A2 and Fubini,
\begin{align*}
r(t)-r(s)
&=
\E\!\left[\int_s^t \nabla g_\theta(X_u)^\top \dot X_u\,du\right] \\
&=
\int_s^t \E[\nabla g_\theta(X_u)^\top \dot X_u]\,du.
\end{align*}
Hence $r$ is absolutely continuous and, for almost every $t$,
\[
r'(t)=\E[\nabla g_\theta(X_t)^\top \dot X_t].
\]
Jensen's inequality and Assumption A3 then give
\[
\begin{aligned}
(r'(t))^2
&\le \E[(\nabla g_\theta(X_t)^\top \dot X_t)^2] \\
&\le \beta^2 \E\|J_f(X_t)\dot X_t\|^2.
\end{aligned}
\]
Apply Lemma~\ref{lem:poincare} and \eqref{eq:poincare-bound}.
\end{proof}

Theorem~\ref{thm:jv} isolates a single instability mechanism for the trajectory in \eqref{eq:risk-trajectory}. Deployment volatility is driven by accumulated tangent amplification along the realized motion of the data stream, as captured by \eqref{eq:jv-bound}. Figure~\ref{fig:geometry} is a geometric rendering of that statement. In that sense, the theorem turns Jacobian-based sensitivity analysis and regularization \cite{sokolic-jacobian,novak-sensitivity,kim-jacobian} into a statement about temporal deployment risk rather than static local robustness. The bound also has a useful operational reading. Volatility requires two ingredients at once, data motion and tangent gain. If either one is small, long-horizon fluctuation remains small.

The low-rank regime gives the algorithmic specialization.

\begin{corollary}[Drift-subspace bound]
\label{cor:lowrank}
Assume the hypotheses of Theorem~\ref{thm:jv}. If
\[
\dot X_t = Va_t + \rho_t,
\qquad
V\in\R^{d\times k},
\]
with $a_t\in\R^k$, $\rho_t\in\R^d$, $V^\top V=I_k$, and $V^\top \rho_t=0$, then
\begin{equation}
\label{eq:lowrank-bound}
\Var_U(r(U))
\le
\frac{2\beta^2 T}{\pi^2}(B_V+B_\rho),
\end{equation}
where
\begin{align*}
B_V &:= \int_0^T \E[\|J_f(X_t)V\|_F^2\|a_t\|^2]dt, \\
B_\rho &:= \int_0^T \E\|J_f(X_t)\rho_t\|^2dt.
\end{align*}
\end{corollary}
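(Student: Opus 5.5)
The plan is to begin from the conclusion of Theorem~\ref{thm:jv} and bound the integrand $\E\|J_f(X_t)\dot X_t\|^2$ pointwise in $t$. Since all hypotheses of Theorem~\ref{thm:jv} are assumed, \eqref{eq:jv-bound} holds. It then suffices to show that, for almost every $t$ and almost every sample path,
\[
\|J_f(X_t)\dot X_t\|^2 \le 2\|J_f(X_t)V\|_F^2\|a_t\|^2 + 2\|J_f(X_t)\rho_t\|^2 .
\]
Taking expectations and integrating over $[0,T]$ will then give $\int_0^T\E\|J_f(X_t)\dot X_t\|^2dt \le 2(B_V+B_\rho)$, and substituting into \eqref{eq:jv-bound} yields \eqref{eq:lowrank-bound}.

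The pointwise inequality follows in two steps. First, linearity gives $J_f(X_t)\dot X_t = J_f(X_t)Va_t + J_f(X_t)\rho_t$, and the elementary inequality $\|u+w\|^2\le 2\|u\|^2+2\|w\|^2$ (Cauchy--Schwarz, or $2\langle u,w\rangle\le\|u\|^2+\|w\|^2$) splits the square. Second, for the drift term we use submultiplicativity of the operator norm together with the inequality between operator and Frobenius norms: $\|J_f(X_t)Va_t\|\le\|J_f(X_t)V\|_{2}\|a_t\|\le\|J_f(X_t)V\|_F\|a_t\|$. In the scalar-score case $J_f(X_t)V$ is a row vector, so this is simply Cauchy--Schwarz with equality of the two norms. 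The orthogonality hypotheses $V^\top V=I_k$ and $V^\top\rho_t=0$ are not actually needed for the inequality. They only make the decomposition of $\dot X_t$ unique, with $a_t=V^\top\dot X_t$ and $\rho_t=(I-VV^\top)\dot X_t$, which also settles measurability of $a_t$ and $\rho_t$.

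The only point needing care, and the nearest thing to an obstacle, is justifying the passage from the pointwise bound to the integrated one. The integrands are nonnegative and measurable, being measurable functions of $X_t$, $\dot X_t$ and the measurable field $J_f$. Tonelli's theorem therefore allows exchanging $\E$ and $\int_0^T$ with no integrability hypotheses. If $B_V$ or $B_\rho$ is infinite the bound is trivial, and otherwise it follows directly. The factor $2$ is the price of dropping the cross term. One could note that it can be removed when the cross term $\langle J_fVa_t,J_f\rho_t\rangle$ vanishes, but that is not required for the stated result.
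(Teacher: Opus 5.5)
Your proposal is correct and follows the paper's proof: apply Theorem~\ref{thm:jv}, split $\|J_f(X_t)Va_t+J_f(X_t)\rho_t\|^2$ using the factor-$2$ inequality, bound the drift term by $\|J_f(X_t)V\|_F^2\|a_t\|^2$, and integrate over $t$. Your extra remarks are accurate refinements rather than a different route: Tonelli justifies exchanging $\E$ and $\int_0^T$, and orthonormality of $V$ is not actually needed for the Frobenius step, even though the paper invokes it there.
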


\begin{proof}
By Theorem~\ref{thm:jv} and \eqref{eq:jv-bound},
\[
\begin{aligned}
\|J_f(X_t)\dot X_t\|^2
&=
\|J_f(X_t)Va_t+J_f(X_t)\rho_t\|^2 \\
&\le
2\|J_f(X_t)Va_t\|^2+2\|J_f(X_t)\rho_t\|^2.
\end{aligned}
\]
Since $V$ has orthonormal columns,
\[
\|J_f(X_t)Va_t\|^2 \le \|J_f(X_t)V\|_F^2\|a_t\|^2.
\]
Integrate over $t$.
\end{proof}

Corollary~\ref{cor:lowrank} is the paper's main reduction. When $B_\rho$ is small, the leading stability term in \eqref{eq:lowrank-bound} is governed by the directional Jacobian energy in the drift subspace rather than by isotropic smoothness. This is precisely the regime in which structured temporal drift can be a useful approximation \cite{yao-wildtime} and low-rank change models become algorithmically plausible \cite{xie-lowrank-allerton2016}.

\section{Method}
The theorem package suggests a simple design principle. If instability is created by motion through steep directions, then regularization should flatten the model along directions the deployment path is expected to traverse, not everywhere at once.

\subsection{Drift-aligned tangent regularization}
Let $V$ estimate the dominant drift subspace from unlabeled deployment covariates. Two minimal estimators are sufficient for the present paper. For window mean $\mu_t$ and lag $\Delta$, one is a mean-difference direction
\[
\Delta\mu_t=\mu_t-\mu_{t-\Delta},
\qquad
v_t=\frac{\Delta\mu_t}{\|\Delta\mu_t\|},
\]
or the top-$k$ principal directions of the difference cloud $\{X_t-X_{t-\Delta}\}$ over a rolling window.

With a fixed subspace estimate $V$, we train using
\begin{equation}
\label{eq:dtr-objective}
\mathcal L_{\mathrm{DTR}}(\theta)
=
\E_{(X,Y)}\ell(f_\theta(X),Y)
\;+\;
\lambda\E_X\|J_f(X)V\|_F^2.
\end{equation}
The objective in \eqref{eq:dtr-objective} is not global Jacobian smoothing. It is a directional sensitivity constraint. The model is flattened only along directions expected to dominate future drift. Corollary~\ref{cor:lowrank} explains why this is the quantity worth shrinking, and it distinguishes DTR from isotropic Jacobian penalties that aim at broader smoothness or margin control \cite{sokolic-jacobian,kim-jacobian}. It is also conceptually adjacent to empirical sensitivity analyses that track related Jacobian quantities without imposing directional structure \cite{novak-sensitivity}.

\subsection{Monitoring score}
The same geometry yields a theorem-motivated proxy diagnostic. With current drift-subspace estimate $V_t$, define speed $s_t$, gain $G_t$, and product $h_t$ by
\begin{equation}
\label{eq:hazard-score}
\begin{aligned}
s_t&:=\|\Delta\mu_t\|/\Delta,
&
G_t&:=\E\|J_f(X_t)V_t\|_F^2, \\
h_t&:=s_t^2G_t.
\end{aligned}
\end{equation}
In the experiments we also report short rolling averages of $h_t$, which better match the accumulated-energy form of the volatility bound.

\begin{proposition}[Rank-$1$ bookkeeping for the hazard score]
\label{prop:hazard}
Consider the rank-$1$ monitoring proxy used below, so that $V_t=v_t\in\R^d$ is a unit vector. Suppose the block mean difference satisfies
\[
\frac{\Delta\mu_t}{\Delta}=v\bar a_t+\bar\rho_t,
\qquad
\|v\|=1,
\qquad
v^\top \bar\rho_t=0,
\]
where $\bar a_t$ is the block-averaged aligned drift coefficient and $\bar\rho_t$ is the block-averaged residual drift. Write
\[
v_t=\cos\theta_t\,v+\sin\theta_t\,u_t,
\qquad
u_t^\top v=0,
\qquad
\|u_t\|=1,
\]
and define
\[
\begin{aligned}
G_{\parallel,t}&:=\E\|J_f(X_t)v\|^2, \\
G_{\perp,t}&:=\E\|J_f(X_t)u_t\|^2, \\
C_t&:=\E\langle J_f(X_t)v,J_f(X_t)u_t\rangle.
\end{aligned}
\]
Then
\[
s_t^2=|\bar a_t|^2+\|\bar\rho_t\|^2
\]
and
\[
G_t=\cos^2\theta_t\,G_{\parallel,t}
\;+\;
\sin^2\theta_t\,G_{\perp,t}
\;+\;
2\sin\theta_t\cos\theta_t\,C_t.
\]
Consequently,
\[
\begin{aligned}
h_t&=(|\bar a_t|^2+\|\bar\rho_t\|^2) \\
&\qquad\times
\Bigl(
\cos^2\theta_t\,G_{\parallel,t}
\;+\;
\sin^2\theta_t\,G_{\perp,t}
\;+\;
2\sin\theta_t\cos\theta_t\,C_t
\Bigr).
\end{aligned}
\]
\end{proposition}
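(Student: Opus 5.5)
The proof is pure bookkeeping: expand both factors of $h_t=s_t^2G_t$ in the given decompositions and multiply. No earlier analytic result is needed beyond the definitions in \eqref{eq:hazard-score}. I will treat $s_t$ and $G_t$ separately and then combine them.

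\emph{Speed factor.} By definition $s_t=\|\Delta\mu_t\|/\Delta$, so $s_t^2=\|\Delta\mu_t/\Delta\|^2=\|v\bar a_t+\bar\rho_t\|^2$. Expanding the square gives
\[
s_t^2=|\bar a_t|^2\|v\|^2+2\bar a_t\,v^\top\bar\rho_t+\|\bar\rho_t\|^2 .
\]
Using $\|v\|=1$ and $v^\top\bar\rho_t=0$, the cross term vanishes and $s_t^2=|\bar a_t|^2+\|\bar\rho_t\|^2$. This is Pythagoras for the orthogonal split. If $\bar a_t$ is allowed to be complex or vector-valued, the same computation holds with the corresponding inner product. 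In the rank-$1$ case it is a real scalar.

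\emph{Gain factor.} With $V_t=v_t$ a single column, $\|J_f(X_t)V_t\|_F^2=\|J_f(X_t)v_t\|^2$. Since $J_f$ is linear, $J_f(X_t)v_t=\cos\theta_t\,J_f(X_t)v+\sin\theta_t\,J_f(X_t)u_t$. Expanding the squared norm gives
\[
\cos^2\theta_t\|J_f(X_t)v\|^2+\sin^2\theta_t\|J_f(X_t)u_t\|^2+2\sin\theta_t\cos\theta_t\langle J_f(X_t)v,J_f(X_t)u_t\rangle .
\]
Taking expectations over $X_t$ term by term yields the claimed formula for $G_t$ in terms of $G_{\parallel,t}$, $G_{\perp,t}$ and $C_t$.

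The one point needing care is this expectation step. It requires $\theta_t$ and $u_t$ to be deterministic at time $t$, i.e.\ fixed by the current subspace estimate and not random with $X_t$. This matches how $G_t$ is defined, since the expectation is taken with $V_t$ held fixed. It also requires the three expectations to be finite. Finiteness follows from square-integrability of $J_f(X_t)v$ and $J_f(X_t)u_t$ together with Cauchy--Schwarz for $C_t$. I will state this explicitly.

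Multiplying the two factors gives the expression for $h_t$. I do not expect a genuine obstacle. The only subtle step is the linearity-of-expectation and integrability remark for the gain factor.
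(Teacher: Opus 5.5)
Your proposal is correct and follows the paper's proof: Pythagoras on the orthogonal split $v\bar a_t+\bar\rho_t$ gives $s_t^2$, and expanding $J_f(X_t)v_t=\cos\theta_t\,J_f(X_t)v+\sin\theta_t\,J_f(X_t)u_t$ and taking expectations term by term gives $G_t$. Your two side remarks make explicit what the paper leaves implicit: $\theta_t,u_t$ must be held fixed under the expectation, and finiteness of $G_{\parallel,t},G_{\perp,t}$ (with Cauchy--Schwarz for $C_t$) is what justifies splitting the expectation.
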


\begin{proof}
The identities are immediate from orthogonal decompositions. The first uses $(\Delta\mu_t)/\Delta=v\bar a_t+\bar\rho_t$ with $v^\top\bar\rho_t=0$. The second expands $J_f(X_t)v_t=\cos\theta_t\,J_f(X_t)v+\sin\theta_t\,J_f(X_t)u_t$ and takes expectations.
\end{proof}

Proposition~\ref{prop:hazard} makes the proxy gap explicit in the rank-$1$ monitoring setting for the score in \eqref{eq:hazard-score}. The finite-window drift statistic contributes the residual term $\|\bar\rho_t\|^2$, while angular error mixes aligned, orthogonal, and overlap energies through $\theta_t$, $G_{\perp,t}$, and $C_t$. In that precise bookkeeping sense, $h_t$ matches the leading low-rank quantity from Corollary~\ref{cor:lowrank} up to block averaging, residual drift, and subspace-estimation error rather than reproducing the corollary integrand exactly. It becomes large only when the stream is moving quickly \emph{and} the predictor is steep along that motion. Relative to dataset-shift diagnostics \cite{rabanser-shift}, uncertainty diagnostics under shift \cite{ovadia-shift}, and sequential detection procedures \cite{wu-rscusum,cao-xie-allerton2015,cooper-allerton2024}, the point of $h_t$ is alignment rather than optimality. It monitors the same directional mechanism singled out by the theory without claiming to be a sequentially optimal detector.
This symmetry between training and monitoring is deliberate. The same geometry used to suppress future instability is reused to measure when future instability is likely.

\section{Experiments}
\textit{Evaluation protocol.}
Hyperparameters use only training/validation windows; deployment is held out for final evaluation. $\lambda$ is selected by validation loss, with validation directional gain only a secondary tie-breaker. Metrics are post-selection matched-seed means: volatility is \eqref{eq:temporal-volatility}, derivative energy is the empirical integral in \eqref{eq:poincare-bound}, directional gain is drift-direction Jacobian energy, and terminal risk is final-block loss.

\subsection{Synthetic time-domain sanity check}
The synthetic study uses the smallest setup that exposes the paper's mechanism. There is one stable signal coordinate, one drifting nuisance coordinate, and a frozen classifier whose tangent geometry can either amplify or suppress that drift. Motivated by temporal-shift evaluation settings \cite{yao-wildtime,han-temporal}, the drift direction is deliberately known so the theorem-to-method link can be read cleanly. Labels are sampled as $Y\sim\mathrm{Bernoulli}(1/2)$ and we write $S=2Y-1\in\{-1,+1\}$. Conditional on $S$, features are generated as
\[
x_1\sim\mathcal N(1.05S,0.90^2),
\qquad
x_2\sim\mathcal N(1.25S+\delta(t),0.55^2).
\]
The first coordinate is a stable signal. The second is a nuisance coordinate that is predictive at training time but drifts during deployment. The label mechanism is held fixed throughout. Only $\delta(t)$ changes.
The benchmark is intentionally simple. Its job is to make the paper's directional mechanism observable with as little confounding structure as possible.

The drift schedule is monotone and smooth, with speed
\begin{align*}
\dot\delta(t)
&=
0.55
+ 1.05e^{-((t-0.33)/0.10)^2}
+ 0.75e^{-((t-0.76)/0.08)^2}, \\
\delta(0) &= 0,
\end{align*}
so the stream contains two periods of rapid motion. We train a small ReLU classifier on samples drawn at $t=0$ and evaluate it on fresh samples along the deployment path. The sweep uses $\lambda\in\{0,0.01,0.03,0.08\}$ over $20$ random seeds. DTR uses the true drift direction $V=e_2$ so that this experiment isolates the theorem-to-method link rather than subspace-estimation error.

Figure~\ref{fig:theory} illustrates the temporal step in the paper's argument. Each point is one trained model, plotted by empirical risk volatility in \eqref{eq:temporal-volatility} versus the derivative-energy quantity from Lemma~\ref{lem:poincare} and \eqref{eq:poincare-bound}. The points sit below the diagonal, as they should, and DTR moves the cloud toward the lower-left corner. The saved summaries also respect the full Jacobian-velocity upper bound from Theorem~\ref{thm:jv}, but that bound is numerically looser, so the figure emphasizes the sharper temporal step that anchors the rest of the pipeline. Averaging over $20$ seeds, mean volatility falls from $3.25\times10^{-3}$ for standard training to $2.39\times10^{-4}$ across the DTR sweep, while mean directional gain falls from $41.5$ to $1.85$.

\begin{figure}[t]
  \centering
  \includegraphics[width=\columnwidth]{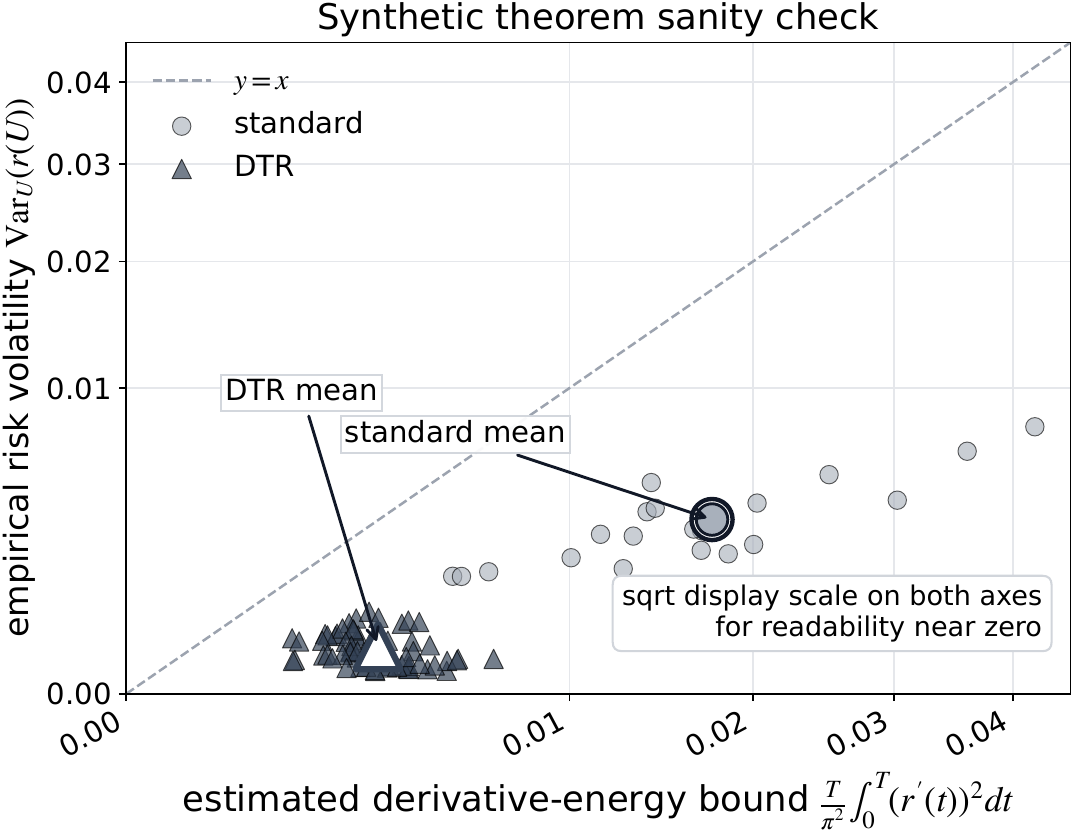}
  \caption{Synthetic time-domain sanity check. Each point is one trained model evaluated over the drifting deployment path. The horizontal axis is the derivative-energy quantity from Lemma~\ref{lem:poincare} and \eqref{eq:poincare-bound}, and the vertical axis is empirical volatility $\Var_U(r(U))$. DTR shifts models toward lower energy and lower volatility.}
  \label{fig:theory}
\end{figure}

\subsection{Directional versus isotropic smoothing}
The main algorithmic question is not whether Jacobian regularization helps in general, but whether directional smoothing beats isotropic smoothing when drift is concentrated in one known direction. We answer that question in the same synthetic environment by replacing the DTR penalty with an isotropic Jacobian penalty
\[
\lambda\E\|\nabla f_\theta(X)\|^2,
\]
while keeping the directional penalty
\[
\lambda\E\|J_f(X)V\|_F^2
\]
for DTR, with the same architecture and $20$ matched seeds over $\lambda\in\{0.01,0.03,0.08\}$. Under rank-$1$ drift, isotropic smoothing spends budget in directions the deployment path does not substantially traverse, whereas DTR spends it only along the realized motion. Table~\ref{tab:comparison} summarizes the mean values across the nonzero-$\lambda$ sweep. DTR improves on both baselines on every metric. Figure~\ref{fig:comparison} (left) shows the matched $\lambda=0.03$ comparison.

\begin{table}[t]
\caption{Directional comparison showing the mean over the nonzero-$\lambda$ sweep. Lower is better on all metrics.}
\centering
\setlength{\tabcolsep}{4pt}
\small
\begin{tabular}{lcccc}
\toprule
Method & Deriv.\ energy & Volatility & Dir.\ gain & Term.\ risk \\
\midrule
Standard   & $5.15\!\times\!10^{-3}$ & $3.25\!\times\!10^{-3}$ & $41.5$ & $0.189$ \\
Isotropic  & $5.85\!\times\!10^{-4}$ & $4.32\!\times\!10^{-4}$ & $2.17$ & $0.172$ \\
DTR        & $2.90\!\times\!10^{-4}$ & $2.39\!\times\!10^{-4}$ & $1.85$ & $0.136$ \\
\bottomrule
\end{tabular}
\label{tab:comparison}
\end{table}

We then test drift-subspace misspecification by rotating the one-dimensional subspace used by DTR. Writing the estimated direction as
\[
\hat v_\alpha=(\sin\alpha,\cos\alpha),
\]
its alignment with the true drift direction $e_2$ is $\cos\alpha$. With the correct subspace, the directional advantage remains strongest. With a mild $20^\circ$ rotation, volatility rises by a factor of $1.39$ and terminal risk by a factor of $1.18$ relative to aligned DTR, but both remain far below the standard model. The same cosine-sine bookkeeping used in Proposition~\ref{prop:hazard} helps explain that pattern: small angular error mainly rescales the aligned contribution, whereas large misalignment transfers weight to orthogonal energy. With an orthogonal subspace, the effect largely disappears. Derivative energy rises by a factor of $39.8$, volatility by $29.6$, and directional gain by $18.9$. Figure~\ref{fig:comparison} (right) therefore sharpens the paper's directional claim. DTR is more targeted than isotropic Jacobian smoothing when the drift geometry is right, and its gains depend on keeping the estimated drift subspace reasonably aligned with realized motion.

\begin{figure*}[t]
  \centering
  \includegraphics[width=0.9\textwidth]{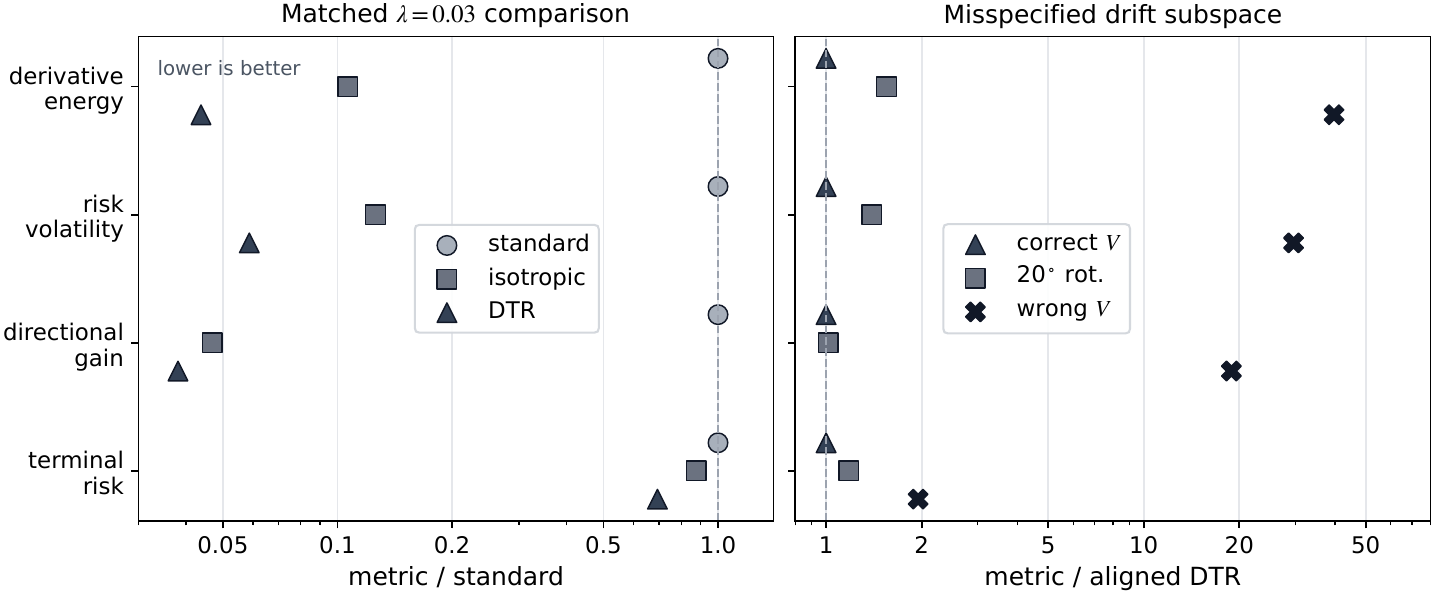}
  \caption{Synthetic directional comparisons. Left shows matched $\lambda=0.03$ ratios for derivative energy, risk volatility, directional gain, and terminal risk, normalized by the standard model. DTR is lower than isotropic Jacobian regularization on all four metrics. Right shows the same metrics normalized by aligned DTR for the correct subspace, a mild $20^\circ$ rotation, and a wrong subspace. Mild misspecification is tolerable, but wrong subspaces are not.}
  \label{fig:comparison}
\end{figure*}

\subsection{Real field deployments}
We next ask whether the same directional pattern persists on real deployments where the drift subspace must be estimated from unlabeled covariates. We use two complementary regression settings: Air Quality as the direct sensor-drift case, and Tetouan as a weather-driven load-prediction task with fixed label semantics and explicit exogenous covariates.
In the real-data experiments, the drift subspace is estimated retrospectively from unlabeled deployment covariates; a fully online deployment would replace this with rolling estimates of $V_t$.

The UCI Air Quality dataset records hourly outputs of a five-sensor gas array together with temperature and humidity in an Italian city. The associated field-study paper explicitly discusses sensor drift and seasonal influences over long deployments \cite{de-vito-airquality}. We predict the reference CO concentration $\mathrm{CO(GT)}$ from the five sensor channels $\mathrm{PT08.S1}$--$\mathrm{PT08.S5}$ together with temperature, $RH$, and $AH$. Rows with the dataset's $-200$ missing-value marker are removed, leaving $1573$ training points from the first $12$ weeks, $580$ validation points from the next $4$ weeks, and $5191$ deployment points grouped into $20$ biweekly blocks. The predictor is frozen after training, and deployment risk is blockwise mean-squared error.
Because this dataset is explicitly a sensor-drift setting, the primary Air Quality DTR subspace is estimated from the sensor channels only. We first remove the supervised linear target direction learned on the training window. We then take the top two singular vectors of consecutive biweekly mean-shift vectors in that target-orthogonal sensor space and embed them back into the full feature space. This keeps weather covariates available to the predictor while avoiding a DTR penalty on the main target-predictive sensor direction. We compare standard training, isotropic Jacobian regularization, and DTR over $10$ matched random seeds. For reporting the deployment monitor, we use the blockwise rank-$1$ diagnostic: if $\mu_t$ is the standardized covariate mean in block $t$ and $\Delta=1$ block, then
\begin{align*}
s_t &= \|\mu_t-\mu_{t-1}\|, & v_t &= \frac{\mu_t-\mu_{t-1}}{\|\mu_t-\mu_{t-1}\|}, \\
G_t &= \E[(\nabla f_\theta(X_t)^\top v_t)^2], & h_t &= s_t^2 G_t.
\end{align*}
We then test a second real benchmark on the UCI Tetouan City power-consumption dataset \cite{uci-tetouan}. It is a ten-minute 2017 weather-and-load series with three zone-level targets; we predict Zone~1 power consumption from temperature, humidity, wind speed, and diffuse-flow channels. Using the same frozen-model protocol, we train on January--April, validate on May--June, and deploy on July--December in six monthly blocks, yielding $17280/8784/26352$ training/validation/deployment points.

Table~\ref{tab:real-results} reports validation-MSE-selected results. Air Quality chooses $\lambda=0.08$ for isotropic smoothing and $\lambda=0.003$ for DTR, while Tetouan chooses $\lambda=3\times10^{-4}$ for isotropic smoothing and $\lambda=10^{-2}$ for DTR. The Air Quality result is favorable but appropriately qualified. DTR improves mean deployment MSE, volatility, and terminal risk relative to standard training, while isotropic smoothing has the lowest mean MSE. The monitoring ablation in Table~\ref{tab:monitoring-volatility} is consistent with this directional account: rolling $h_t$ predicts future risk movement better than drift alone. The Air Quality gain column is the full block-direction gain used by the diagnostic; the DTR penalty itself acts on the target-orthogonal sensor subspace. Tetouan gives the stronger real-data directional result, with DTR improving both deployment MSE and volatility relative to both baselines (Figure~\ref{fig:tetouan}). Paired seed comparisons avoid per-block significance tests. On Air Quality, DTR improves deployment MSE and volatility in $9/10$ seeds against standard training, with bootstrap $95\%$ CIs for the paired mean differences below zero. On Tetouan, DTR improves volatility in $8/10$ seeds against standard training and $8/10$ seeds against isotropic smoothing, again with bootstrap $95\%$ CIs below zero (the large cross-seed volatility SDs in Table~\ref{tab:real-results} motivate relying on these paired comparisons rather than mean differences alone).

\begin{table*}[t]
\caption{Validation-selected real deployment results. Entries are mean $\pm$ standard deviation over $10$ matched random seeds. Lower is better for all metrics.}
\centering
\setlength{\tabcolsep}{4pt}
\scriptsize
\begin{tabular}{llcccc}
\toprule
Dataset & Method & Deploy MSE & Volatility & Dir.\ gain & Terminal risk \\
\midrule
Air Quality & Standard & $0.449\pm0.069$ & $0.073\pm0.023$ & $0.079\pm0.008$ & $0.165\pm0.034$ \\
Air Quality & Iso.\ Jacobian & $0.415\pm0.030$ & $0.077\pm0.008$ & $0.053\pm0.005$ & $0.177\pm0.045$ \\
Air Quality & DTR & $0.432\pm0.058$ & $0.069\pm0.020$ & $0.079\pm0.008$ & $0.161\pm0.028$ \\
\addlinespace
Tetouan & Standard & $(1.08\pm1.11)\!\times\!10^8$ & $(1.07\pm1.66)\!\times\!10^{16}$ & $0.024\pm0.024$ & $(1.57\pm0.41)\!\times\!10^7$ \\
Tetouan & Iso.\ Jacobian & $(1.01\pm0.81)\!\times\!10^8$ & $(7.20\pm10.24)\!\times\!10^{15}$ & $0.023\pm0.017$ & $(1.61\pm0.41)\!\times\!10^7$ \\
Tetouan & DTR & $(6.82\pm4.87)\!\times\!10^7$ & $(3.07\pm4.32)\!\times\!10^{15}$ & $0.013\pm0.008$ & $(1.45\pm0.45)\!\times\!10^7$ \\
\bottomrule
\end{tabular}
\label{tab:real-results}
\end{table*}

Table~\ref{tab:aq-subspace} shows why the Air Quality subspace definition matters. If $V$ is estimated from all covariates, validation-MSE selection chooses a large DTR penalty that suppresses block-direction gain but worsens deployment risk. The target-orthogonal sensor subspace avoids that failure: validation-MSE selection chooses $\lambda=0.003$, and DTR improves MSE and volatility in $9/10$ matched seeds against standard training. A weather-residualized sensor subspace gives an even lower mean MSE and volatility at a moderate $\lambda=0.03$. Validation MSE selects a larger penalty for that variant, so we report it only as a subspace sensitivity check. The practical lesson is the same as the misspecification experiment: DTR is useful when the estimated subspace represents nuisance drift rather than broad covariate motion or target signal.

\begin{table}[t]
\caption{Air Quality DTR subspace ablation. Entries are means over $10$ matched seeds. The first two DTR rows are validation-MSE selected; the weather-residualized row is a moderate-penalty sensitivity point. Wins are paired DTR-vs-standard MSE/volatility wins.}
\centering
\setlength{\tabcolsep}{3.5pt}
\scriptsize
\begin{tabular}{lccccc}
\toprule
Subspace & $\lambda$ & MSE & Vol. & Gain & Wins \\
\midrule
All covariates & $0.080$ & $0.485$ & $0.112$ & $0.044$ & $4/10,\;2/10$ \\
Target-orth.\ sensors & $0.003$ & $0.432$ & $0.069$ & $0.079$ & $9/10,\;9/10$ \\
Weather-resid.\ sensors & $0.030$ & $0.411$ & $0.061$ & $0.064$ & $9/10,\;6/10$ \\
\bottomrule
\end{tabular}
\label{tab:aq-subspace}
\end{table}

Finally, we evaluated the matched hazard score against block-to-block risk movement rather than raw risk level. This is the more natural target for the theory, since the bound controls temporal volatility. Table~\ref{tab:monitoring-volatility} reports Spearman correlations with next-block squared risk change. The one-block product $h_t$ is noisy, but a short rolling average of the matched score is positively associated with future risk movement on both real deployments. On Air Quality, the roll-2 score exceeds drift-only with a bootstrap $95\%$ confidence interval separated from the drift-only estimate. On Tetouan, the point estimate is also positive, but uncertainty is larger because there are only six deployment blocks per seed. Rolling $h_t$ reaches Spearman correlations of $0.31$--$0.35$ on both datasets; we treat it as a theorem-matched volatility diagnostic rather than a sequentially optimal detector.

\begin{table}[t]
\caption{Monitoring-score ablation on selected DTR deployments. Entries are Spearman correlations with next-block squared risk change $(r_{t+1}-r_t)^2$. Higher is better; rolling columns use fewer valid transitions.}
\centering
\setlength{\tabcolsep}{3pt}
\scriptsize
\begin{tabular}{lrrrrr}
\toprule
Dataset & Drift $s_t^2$ & Gain $G_t$ & Product $h_t$ & Roll-2 $h_t$ & Roll-3 $h_t$ \\
\midrule
Air Quality & $0.169$ & $-0.099$ & $0.044$ & $0.307$ & $0.353$ \\
Tetouan & $-0.339$ & $0.328$ & $0.218$ & $0.305$ & $0.335$ \\
\bottomrule
\end{tabular}
\label{tab:monitoring-volatility}
\end{table}

\begin{figure}[t]
  \centering
  \includegraphics[width=\columnwidth]{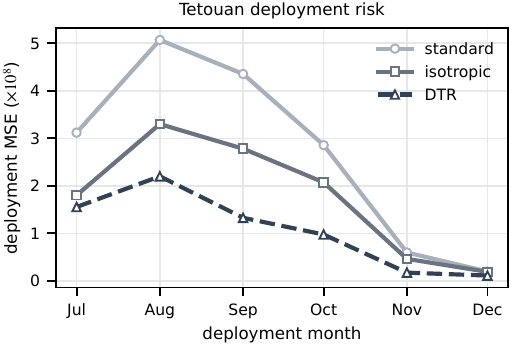}
  \caption{Tetouan frozen deployment over six monthly blocks. DTR stays below both standard and isotropic training throughout the deployment horizon, with the largest gap in the late-summer high-risk regime.}
  \label{fig:tetouan}
\end{figure}

\section{Discussion and Limitations}
The crucial caution is Assumption A3. Dynamic covariate shift by itself does not imply that the conditional-risk field is controlled by the score Jacobian along the realized path. We impose that domination explicitly because it is exactly what the theorem uses. The result is also not tied to ReLU networks in particular. ReLU models are one admissible class when deployment avoids kink sets almost surely for almost every deployment time, but smooth models fit the same theorem just as well.

The four empirical pieces play different roles. The time-domain sanity check asks whether the Poincar\'e step is visible in trained models at all. The isotropic-versus-directional comparison asks whether the low-rank specialization has real design content. The Air Quality and Tetouan studies ask whether these directional quantities still matter on real frozen deployments once the drift subspace must be estimated. That progression matches the claim: isolate the temporal bound, test the low-rank design implication, then check whether they remain visible on noisier field data. The Air Quality subspace ablation is informative precisely because it separates broad covariate motion from nuisance sensor drift. An all-covariate subspace over-regularizes, while a target-orthogonal sensor subspace turns the same DTR objective into a validation-selected deployment improvement.

The low-rank analysis also depends on estimating a drift subspace. The synthetic misspecification study suggests that moderate angular error is tolerable but large misalignment is not. In the real-data studies we estimate a fixed subspace from unlabeled future covariates, which is more realistic than the aligned synthetic setting but still retrospective. A real online deployment would require rolling estimation of $V_t$, and the quality of the monitoring score would then depend on residual drift energy and subspace-estimation error. Finally, the Jacobian-velocity bound is an upper bound: useful because it is actionable, not because it is tight in every regime.

\section{Conclusion}
Long-horizon robustness under dynamic shift is best viewed as a tangent-control problem. The data stream contributes a velocity, the model contributes a local linear map, and deployment volatility is governed by their interaction. The central message is simple. Deployment becomes unstable when environmental motion repeatedly passes through directions to which the model is locally sensitive. The theorem package makes this precise, and DTR acts on it. The experiments show where that directional geometry is strongest: controlled low-rank drift, target-orthogonal sensor drift on Air Quality, and the Tetouan frozen deployment. Air Quality adds the complementary lesson that estimating $V$ is part of the method: drift-aligned regularization helps when the subspace captures nuisance motion, but an overly broad subspace can flatten useful predictive signal.

Relative to temporal-shift benchmarks and static shift analyses \cite{ben-david-shift,koh-wilds,yao-wildtime,han-temporal}, the paper contributes a theorem-backed mechanism: the same environmental motion can encounter different directional tangent gains across frozen models. Relative to test-time adaptation \cite{sun-ttt,wang-tent,gui-atta}, it studies the frozen-model regime directly. Relative to Jacobian smoothing and shift detection \cite{sokolic-jacobian,kim-jacobian,rabanser-shift,cooper-allerton2024}, it ties training-time regularization and monitoring to the same directional drift sensitivity. If future motion is low-dimensional, robustness should be enforced, measured, and monitored in those directions rather than spread uniformly over the ambient space.

\section*{Acknowledgment}
The author used Python packages and AI-assisted coding tools, including Codex and Claude Code, for code development, algebraic checks, proof-verification scaffolding, repository hygiene, and editorial refinement. The author conceptualized, reviewed, and finalized all mathematical statements, experiments, code, interpretations, and final text and is solely responsible for the content. The author received no external funding and reports no conflicts of interest.

\balance
\bibliographystyle{IEEEtran}
\bibliography{references}

\begin{thebibliography}{10}
\providecommand{\url}[1]{#1}
\csname url@samestyle\endcsname
\providecommand{\newblock}{\relax}
\providecommand{\bibinfo}[2]{#2}
\providecommand{\BIBentrySTDinterwordspacing}{\spaceskip=0pt\relax}
\providecommand{\BIBentryALTinterwordstretchfactor}{4}
\providecommand{\BIBentryALTinterwordspacing}{\spaceskip=\fontdimen2\font plus
\BIBentryALTinterwordstretchfactor\fontdimen3\font minus
  \fontdimen4\font\relax}
\providecommand{\BIBforeignlanguage}[2]{{%
\expandafter\ifx\csname l@#1\endcsname\relax
\typeout{** WARNING: IEEEtran.bst: No hyphenation pattern has been}%
\typeout{** loaded for the language `#1'. Using the pattern for}%
\typeout{** the default language instead.}%
\else
\language=\csname l@#1\endcsname
\fi
#2}}
\providecommand{\BIBdecl}{\relax}
\BIBdecl

\bibitem{ben-david-shift}
S.~Ben-David, J.~Blitzer, K.~Crammer, A.~Kulesza, F.~Pereira, and
  J.~Wortman~Vaughan, ``A theory of learning from different domains,''
  \emph{Machine Learning}, vol.~79, no. 1--2, pp. 151--175, 2010.

\bibitem{moreno-shift}
J.~G. Moreno-Torres, T.~Raeder, R.~Alaiz-Rodr{\'i}guez, N.~V. Chawla, and
  F.~Herrera, ``A unifying view on dataset shift in classification,''
  \emph{Pattern Recognition}, vol.~45, no.~1, pp. 521--530, 2012.

\bibitem{gama-drift}
J.~Gama, I.~{\v Z}liobait{\.e}, A.~Bifet, M.~Pechenizkiy, and A.~Bouchachia,
  ``A survey on concept drift adaptation,'' \emph{ACM Computing Surveys},
  vol.~46, no.~4, pp. 44:1--44:37, 2014.

\bibitem{koh-wilds}
P.~W. Koh, S.~Sagawa, H.~Marklund, S.~M. Xie, M.~Zhang, A.~Balsubramani, W.~Hu,
  M.~Yasunaga, R.~L. Phillips, I.~Gao, T.~Lee, E.~David, I.~Stavness, W.~Guo,
  B.~Earnshaw, I.~Haque, S.~M. Beery, J.~Leskovec, A.~Kundaje, E.~Pierson,
  S.~Levine, C.~Finn, and P.~Liang, ``Wilds: A benchmark of in-the-wild
  distribution shifts,'' in \emph{Proceedings of the 38th International
  Conference on Machine Learning}, ser. Proceedings of Machine Learning
  Research, vol. 139, 2021, pp. 5637--5664.

\bibitem{yao-wildtime}
H.~Yao, C.~Choi, B.~Cao, Y.~Lee, P.~W. Koh, and C.~Finn, ``Wild-time: A
  benchmark of in-the-wild distribution shift over time,'' in \emph{Advances in
  Neural Information Processing Systems}, vol.~35, 2022, pp. 10\,309--10\,324.

\bibitem{han-temporal}
E.~Han, C.~Huang, and K.~Wang, ``Model assessment and selection under temporal
  distribution shift,'' in \emph{Proceedings of the 41st International
  Conference on Machine Learning}, ser. Proceedings of Machine Learning
  Research, vol. 235, 2024, pp. 17\,374--17\,392.

\bibitem{landers-repo}
J.~R. Landers, ``jacobian-velocity-bounds,'' GitHub repository, 2026, [Online].
  Available: \url{https://github.com/jonland82/jacobian-velocity-bounds}.
  Accessed: May 15, 2026.

\bibitem{landers-site}
------, ``Jacobian-velocity bounds for deployment risk under covariate drift,''
  Project website, 2026, [Online]. Available:
  \url{https://jonland82.github.io/jacobian-velocity-bounds/}. Accessed: May
  15, 2026.

\bibitem{simchowitz-shift}
M.~Simchowitz, A.~Ajay, P.~Agrawal, and A.~Krishnamurthy, ``Statistical
  learning under heterogeneous distribution shift,'' in \emph{Proceedings of
  the 40th International Conference on Machine Learning}, ser. Proceedings of
  Machine Learning Research, vol. 202, 2023, pp. 31\,800--31\,851.

\bibitem{sun-ttt}
Y.~Sun, X.~Wang, Z.~Liu, J.~Miller, A.~A. Efros, and M.~Hardt, ``Test-time
  training with self-supervision for generalization under distribution
  shifts,'' in \emph{Proceedings of the 37th International Conference on
  Machine Learning}, ser. Proceedings of Machine Learning Research, vol. 119,
  2020, pp. 9229--9248.

\bibitem{wang-tent}
D.~Wang, E.~Shelhamer, S.~Liu, B.~Olshausen, and T.~Darrell, ``Tent: Fully
  test-time adaptation by entropy minimization,'' in \emph{International
  Conference on Learning Representations}, 2021.

\bibitem{gui-atta}
S.~Gui, X.~Li, and S.~Ji, ``Active test-time adaptation: Theoretical analyses
  and an algorithm,'' in \emph{International Conference on Learning
  Representations}, 2024.

\bibitem{sokolic-jacobian}
J.~Sokolic, R.~Giryes, G.~Sapiro, and M.~R.~D. Rodrigues, ``Robust large margin
  deep neural networks,'' \emph{IEEE Transactions on Signal Processing},
  vol.~65, no.~16, pp. 4265--4280, 2017.

\bibitem{kim-jacobian}
T.~Kim and H.~Yang, ``An infinite-width analysis on the jacobian-regularised
  training of a neural network,'' in \emph{Proceedings of the 41st
  International Conference on Machine Learning}, ser. Proceedings of Machine
  Learning Research, vol. 235, 2024, pp. 24\,584--24\,657.

\bibitem{novak-sensitivity}
R.~Novak, Y.~Bahri, D.~A. Abolafia, J.~Pennington, and J.~Sohl-Dickstein,
  ``Sensitivity and generalization in neural networks: an empirical study,'' in
  \emph{International Conference on Learning Representations}, 2018.

\bibitem{rabanser-shift}
S.~Rabanser, S.~G{\"u}nnemann, and Z.~C. Lipton, ``Failing loudly: An empirical
  study of methods for detecting dataset shift,'' in \emph{Advances in Neural
  Information Processing Systems}, vol.~32, 2019, pp. 1396--1408.

\bibitem{ovadia-shift}
Y.~Ovadia, E.~Fertig, J.~Ren, Z.~Nado, D.~Sculley, S.~Nowozin, J.~V. Dillon,
  B.~Lakshminarayanan, and J.~Snoek, ``Can you trust your model's uncertainty?
  evaluating predictive uncertainty under dataset shift,'' in \emph{Advances in
  Neural Information Processing Systems}, vol.~32, 2019, pp. 13\,991--14\,002.

\bibitem{wu-rscusum}
S.~Wu, E.~Diao, J.~Ding, T.~Banerjee, and V.~Tarokh, ``Robust quickest change
  detection for unnormalized models,'' in \emph{Proceedings of the Thirty-Ninth
  Conference on Uncertainty in Artificial Intelligence}, ser. Proceedings of
  Machine Learning Research, vol. 216, 2023, pp. 2314--2323.

\bibitem{cao-xie-allerton2015}
Y.~Cao and Y.~Xie, ``Multi-sensor gradual change detection,'' in
  \emph{Proceedings of the 53rd Annual Allerton Conference on Communication,
  Control, and Computing}, 2015, pp. 827--834.

\bibitem{xie-lowrank-allerton2016}
Y.~Xie and L.~M. Seversky, ``Sequential low-rank change detection,'' in
  \emph{Proceedings of the 54th Annual Allerton Conference on Communication,
  Control, and Computing}, 2016, pp. 128--133.

\bibitem{krishnamurthy-allerton2022}
V.~Krishnamurthy and L.~Snow, ``Quickest change detection using time
  inconsistent anticipatory and quantum decision modeling,'' in
  \emph{Proceedings of the 58th Annual Allerton Conference on Communication,
  Control, and Computing}, 2022, pp. 1--8.

\bibitem{cooper-allerton2024}
A.~Cooper and S.~P. Meyn, ``Quickest change detection using mismatched cusum
  extended abstract,'' in \emph{Proceedings of the 60th Annual Allerton
  Conference on Communication, Control, and Computing}, 2024, pp. 1--7.

\bibitem{de-vito-airquality}
S.~De~Vito, E.~Massera, M.~Piga, L.~Martinotto, and G.~Di~Francia, ``On field
  calibration of an electronic nose for benzene estimation in an urban
  pollution monitoring scenario,'' \emph{Sensors and Actuators B: Chemical},
  vol. 129, no.~2, pp. 750--757, 2008.

\bibitem{uci-tetouan}
A.~Salam and A.~El~Hibaoui, ``Power consumption of tetouan city,'' UCI Machine
  Learning Repository, 2018, [Dataset].

\end{thebibliography}

\end{document}